\renewcommand{\baselinestretch}{0.9}
\DeclareMathOperator*{\argminA}{arg\,min}
\DeclareMathOperator*{\mae}{MAE}
\DeclarePairedDelimiter\norm{\lVert}{\rVert}%
\newcommand\nth{\textsuperscript{th}\xspace}
\newtheorem{theorem}{Theorem}
\newtheorem{assumption}{Assumption}
\newtheorem{lemma}[theorem]{Lemma}
\newcolumntype{Y}{>{\centering\arraybackslash}X}
\title{Multi-view Self-Paced Robust Learning}
\title{Distributed Self-Paced Robust Learning via \\Consensus Alternating Direction Method of Multipliers}
\title{Distributed Self-Paced Learning in Robust Regression}
\title{Distributed Self-Paced Robust Learning}
\title{Distributed Self-Paced Learning}
\title{Distributed Self-Paced Learning in Alternating \\Direction Method of Multipliers}
\author{Xuchao Zhang{$^1$}, Liang Zhao{$^2$}, Zhiqian Chen{$^1$}, Chang-Tien Lu{$^1$} \\ 
	{$^1$}Discovery Analytics Center, Virginia Tech, Falls Church, VA, USA\\
	{$^2$}George Mason University, Fairfax, VA, USA\\
	{$^1$}\{xuczhang, czq, ctlu\}@vt.edu, {$^2$}lzhao9@gmu.edu}
\begin{document}

\maketitle
\begin{abstract}

Self-paced learning (\textit{SPL}) mimics the cognitive process of humans, who generally learn from easy samples to hard ones. One key issue in \textit{SPL} is the training process required for each instance weight depends on the other samples and thus cannot easily be run in a distributed manner in a large-scale dataset. In this paper, we reformulate the self-paced learning problem into a distributed setting and propose a novel Distributed Self-Paced Learning method (\textit{DSPL}) to handle large scale datasets. Specifically, both the model and instance weights can be optimized in parallel for each batch based on a consensus alternating direction method of multipliers. We also prove the convergence of our algorithm under mild conditions. Extensive experiments on both synthetic and real datasets demonstrate that our approach is superior to those of existing methods.

\end{abstract}

\section{Introduction}

Inspired by the learning processes used by humans and animals \cite{bengio2009curriculum}, Self-Paced Learning (\textit{SPL}) \cite{kumar2010self} considers training data in a meaningful order, from easy to hard, to facilitate the learning process. Unlike standard curriculum learning \cite{bengio2009curriculum}, which learns the data in a predefined curriculum design based on prior knowledge, \textit{SPL} learns the training data in an order that is dynamically determined by feedback from the individual learner, which means it can be more extensively utilized in practice. In self-paced learning, given a set of training samples along with their labels, a parameter $\lambda$ is used to represents the ``age" of the \textit{SPL} in order to control the learning pace. When $\lambda$ is small, ``easy" samples with small losses are considered. As $\lambda$ grows, ``harder" samples with larger losses are gradually added to the training set. This type of learning process is modeled on the way human education and cognition functions. For instance, students will start by learning easier concepts (e.g., Linear Equations) before moving on to more complex ones (e.g., Differential Equations) in the mathematics curriculum. Self-paced learning can also be finely explained in a robust learning manner, where uncorrupted data samples are likely to be used for training earlier in the process than corrupted data. 

In recent years, self-paced learning \cite{kumar2010self} has received widespread attention for various applications in machine learning, such as image classification \cite{AAAI159750}, event detection \cite{jiang2014easy,Zhang:2017:SEF:3132847.3132996} and object tracking \cite{supancic2013self,Zhang:2016:BSD:3061053.3061115}. A wide assortment of \textit{SPL}-based methods \cite{Pi:2016:SBL:3060832.3060891,ma2017self} have been developed, including self-paced curriculum learning \cite{AAAI159750}, self-paced learning with diversity \cite{jiang2014self}, multi-view \cite{xu2015multi} and multi-task \cite{li2017self,ijcai2017-351} self-paced learning. In addition, several researchers have conducted theoretical analyses of self-paced learning. \cite{meng2015objective} provides a theoretical analysis of the robustness of \textit{SPL}, revealing that the implicit objective function of \textit{SPL} has a similar configuration to a non-convex regularized penalty. Such regularization restricts the contributions of noisy examples to the objective, and thus enhances the learning robustness. \cite{ma2017convergence} proved that the learning process of \textit{SPL} always converges to critical points of its implicit objective under mild conditions, while \cite{Fan2017SelfPacedLA} studied a group of new regularizers, named self-paced implicit regularizers that are derived from convex conjugacy.

Existing self-paced learning approaches typically focus on modeling the entire dataset at once; however, this may introduce a bottleneck in terms of memory and computation, as today's fast-growing datasets are becoming too large to be handled integrally. For those seeking to address this issue, the main challenges can be summarized as follows: 
1) \textit{Computational infeasibility of handling the entire dataset at once.} Traditional self-paced learning approaches gradually grow the training set according to their learning pace. However, this strategy fails when the training set grows too large to be handled due to the limited capacity of the physical machines. Therefore, a scalable algorithm is required to extend the existing self-paced learning algorithm for massive datasets.
2) \textit{Existence of heterogeneously distributed ``easy" data.} Due to the unpredictability of data distributions, the ``easy" data samples can be arbitrarily distributed across the whole dataset. Considering the entire dataset as a combination of multiple batches, some batches may contain large amount of ``hard" samples. Thus, simply applying self-paced learning to each batch and averaging across the trained models is not an ideal approach, as some models will only focus on the ``hard" samples and thus degrade the overall performance.
3) \textit{Dependency decoupling across different data batches.} In self-paced learning, the instance weights depend on the model trained on the entire dataset. Also, the trained model depends on the weights assigned to each data instance. If we consider each data batch independently, a model trained in a ``hard" batch can mistakenly mark some ``hard" samples as ``easy" ones. For example, in robust learning, the corrupted data samples are typically considered as ``hard" samples, then more corrupted samples will therefore tend to be involved into the training process when we train data batches independently.


In order to simultaneously address all these technical challenges, this paper presents a novel Distributed Self-Paced Learning (\textit{DSPL}) algorithm. The main contributions of this paper can be summarized as follows: 
1) We reformulate the self-paced problem into a distributed setting. Specifically, an auxiliary variable is introduced to decouple the dependency of the model parameters for each data batch. 
2) A distributed self-paced learning algorithm based on consensus ADMM is proposed to solve the \textit{SPL} problem in a distributed setting. The algorithm optimizes the model parameters for each batch in parallel and consolidates their values in each iteration. 
3) A theoretical analysis is provided for the convergence of our proposed \textit{DSPL} algorithm. The proof shows that our new algorithm will converge under mild assumptions, e.g., the loss function can be non-convex. 
4) Extensive experiments have been conducted utilizing both synthetic and real-world data based on a robust regression task. The results demonstrate that the proposed approaches consistently outperform existing methods for multiple data settings. 
To the best of our knowledge, this is the first work to extend self-paced learning to a distributed setting, making it possible to handle large-scale datasets.

The reminder of this paper is organized as follows. Section \ref{section:problem} gives a formal problem formulation. The proposed distributed self-paced learning algorithm is presented in Section \ref{section:method} and Section \ref{section:analysis} presents a theoretical analysis of the convergence of the proposed method. In Section \ref{section:experiment}, the experimental results are analyzed and the paper concludes with a summary of our work in Section \ref{section:conclusion}.

\section{Problem Formulation}\label{section:problem}
In the context of distributed self-paced learning, we consider the samples to be provided in a sequence of mini batches as $\{(X^{(1)}, \bm y^{(1)}), \dots , (X^{(m)}, \bm y^{(m)})\}$, where $X^{(i)} \in \mathbbm{R}^{p \times n_i}$ represents the sample data for the $i\nth$ batch, $\bm y^{(i)}$ is the corresponding response vector, and $n_i$ is the instance number of the $i\nth$ batch. %

The goal of self-paced learning problem is to infer the model parameter $\bm w \in \mathbbm{R}^{p}$ for the entire dataset, which can be formally defined as follows:
\begin{equation} \label{eq:org_problem}
	\begin{gathered}
		\argminA_{\bm w, \bm v} \sum_{i=1}^m f_i(\bm w, \bm v_i) + \norm{\bm w}_2^2\\
		s.t. \ \ v_{ij} \in [0, 1],\ \ \forall i = 1, \dots, m, \forall j = 1, \dots, n_i
	\end{gathered}
\end{equation}
where $\norm{\bm w}_2^2$ is the regularization term for model parameters $\bm w$. Variable $\bm v_i$ represents the instance weight vector for the $i\nth$ batch and $v_{ij}$ is the weight of the $j\nth$ instance in the $i\nth$ batch. The objective function $f_i(\bm w, \bm v_i)$ for each mini-batch is defined as follows:
\begin{equation} \label{eq:org_obj_function}
	\begin{gathered}
		f_i(\bm w, \bm v_i) = \sum_{j=1}^{n_i} v_{ij} \mathcal{L}(y_{ij}, g(\bm w, \bm x_{ij})) - \lambda \sum_{j=1}^{n_i} v_{ij}
	\end{gathered}
\end{equation}
We denote $\bm x_{ij} \in \mathbbm{R}^{p}$ and $y_{ij} \in \mathbbm{R}$ as the feature vector and its corresponding label for the $j\nth$ instance in the $i\nth$ batch. The loss function $\mathcal{L}$ is used to measure the error between label $y_{ij}$ and the estimated value from model $g$. The term $- \lambda \sum_{j=1}^{n_i} v_{ij}$ is the regularization term for instance weights $\bm v_i$, where parameter $\lambda$ controls the learning pace. The notations used in this paper are summarized in Table \ref{table:math_notation}. 

The problem defined above is very challenging in the following three aspects. 
First, data instances for all $m$ batches can be too large to be handled simultaneously in their entirety, thus requiring the use of a scalable algorithm for large datasets.
Second, the instance weight variable $\bm v_i$ of each batch depends on the optimization result for $\bm w$ shared by all the data, which means all the batches are inter-dependent and it is thus not feasible to run them in parallel.
Third, the objective function of variables $\bm w_i$ and $\bm v_i$ are jointly non-convex and it is an NP-hard problem to retrieve the global optimal solution \cite{gorski2007biconvex}. In next section, we present a distributed self-paced learning algorithm based on consensus ADMM to address all these challenges.

\begin{table}[tb]
	\caption{Mathematical Notations}
	\centering
	\scriptsize
	\label{table:math_notation}
	\tabcolsep=0.1cm
	\scalebox{1.3}{
		\begin{tabular}{ l l }
			\toprule
			\textbf{Notations} & \textbf{Explanations} \\ 
			\hline			
			$p$  & feature number in data matrix $X^{(i)}$ \\
			$n_i$  &  instance number in the $i\nth$ data batch \\ 
			$X^{(i)}$ & data matrix of the $i\nth$ batch \\
			$\bm y^{(i)}$  & the response vector of the $i\nth$ batch \\
			$\bm w$  & model parameter of the entire dataset \\
			$\bm v_i$  & instance weight vector of the $i\nth$ batch \\
			$v_{ij}$  & weight of the $j\nth$ instance in the $i\nth$ batch \\
			$\lambda$  & parameter to control the learning pace \\
			$\mathcal{L}$  & loss function of estimated model \\
			\bottomrule

		\end{tabular}
	}
\end{table}

\section{The Proposed Methodology}\label{section:method}

In this section, we propose a distributed self-paced learning algorithm based on the alternating direction method of multipliers (ADMM) to solve the problem defined in Section \ref{section:problem}.

The problem defined in Equation \eqref{eq:org_problem} cannot be solved in parallel because the model parameter $\bm w$ is shared in each batch and the result of $\bm w$ will impact on the instance weight variable $\bm v_i$ for each batch. In order to decouple the relationships among all the batches, we introduce different model parameters $\bm w_i$ for each batch and use an auxiliary variable $\bm z$ to ensure the uniformity of all the model parameters. The problem can now be reformulated as follows:
\begin{equation}\label{eq:relax_problem}
	\begin{gathered}
		\argminA_{\bm w_i, \bm v_i, \bm z} \sum_{i=1}^m f_i(\bm w_i, \bm v_i; \lambda) + \norm{\bm z}_2^2\\
		s.t. \ \ v_{ij} \in [0, 1],\ \ \forall i = 1, \dots, m, \forall j = 1, \dots, n_i \\
		\bm w_i - \bm z = 0,\ \ \forall i = 1, \dots , m		
	\end{gathered}
\end{equation}
where the function $f_i(\bm w_i, \bm v_i)$ is defined as follows.
\begin{equation} \label{eq:obj_function}
	\begin{gathered}
		f_i(\bm w_i, \bm v_i; \lambda) = \sum_{j=1}^{n_i} v_{ij} \mathcal{L}(y_{ij}, g(\bm w_i, \bm x_{ij})) - \lambda \sum_{j=1}^{n_i} v_{ij}
	\end{gathered}
\end{equation}

Unlike the original problem defined in Equation \eqref{eq:org_problem}, here each batch has its own model parameter $\bm w_i$ and the constraint $\bm w_i - \bm z = 0$ for $\forall i=1, \dots, m$ ensures the model parameter $\bm w_i$ has the same value as the auxiliary variable $\bm z$. The purpose of the problem reformulation is to optimize the model parameters $\bm w_i$ in parallel for each batch. It is important to note that the reformulation is \textit{tight} because our new problem is equivalent to the original problem when the constraint is strictly satisfied.

In the new problem, Equation \eqref{eq:relax_problem} is a bi-convex optimization problem over $\bm v_i$ and $\bm w_i$ for each batch with fixed $\bm z$, which can be efficiently solved using the Alternate Convex Search (ACS) method \cite{gorski2007biconvex}. With the variable  $\bm v$ fixed, the remaining variables $\{\bm w_i\}$, $\bm z$ and $\bm \alpha$ can be solved by consensus ADMM \cite{boyd2011distributed}. As the problem is an NP-hard problem, in which the global optimum requires polynomial time complexity, we propose an alternating algorithm \textit{DSPL} based on ADMM to handle the problem efficiently. 

The augmented Lagrangian format of optimization in Equation \eqref{eq:relax_problem} can be represented as follows:
\begin{equation} \label{eq:lag}
	\begin{aligned}
		L &= \sum_{i=1}^m f_i(\bm w_i, \bm v_i; \lambda) + \norm{\bm z}_2^2 + \sum_{i=1}^m \bm \alpha_i^T(\bm w_i - \bm z) \\
		&\ \ \ \ \ \ \ \ \ \ \ + \frac{\rho}{2} \sum_{i=1}^m \norm{\bm w_i - \bm z}_2^2
	\end{aligned}
\end{equation}

where $\{\alpha_i\}_{i=1}^m$ are the Lagrangian multipliers and $\rho$ is the step size of the dual step. 

The process used to update model parameter $\bm w_i$ for the $i\nth$ batch with the other variables fixed is as follows: 
\begin{equation} \label{eq:problem_w}
	\begin{aligned}
		\bm w_i^{k+1} &= \argminA_{\bm w_i} f_i(\bm w_i, \bm v_i; \lambda) + [\bm \alpha_i^k]^T(\bm w_i - \bm z^k) \\ 
		&\ \ \ \ \ \ \ \ \ \ \ \ \ \ \ \ \ \ + \frac{\rho}{2} \norm{\bm w_i - \bm z^k}_2^2 \\
	\end{aligned}
\end{equation}

Specifically, if we choose the loss function $\mathcal{L}$ to be a squared loss and model $g(\bm w, \bm x_{ij})$ to be a linear regression $g(\bm w, \bm x_{ij}) = \bm w^T \bm x_{ij}$, we have the following analytical solution for $\bm w_i$:
\begin{equation}
	\begin{aligned}
		\bm w_i^{k+1} =& \bigg(2\sum_{j=1}^{n_i} \bm v_{ij} \bm x_{ij} \bm x_{ij}^T + \rho \cdot I \bigg)^{-1}\\
		&\ \ \ \ \ \ \ \ \ \cdot \bigg(2\sum_{j=1}^{n_i} \bm v_{ij} \bm x_{ij} y_{ij} - \bm \alpha_i^k + \rho \bm z^k \bigg) \\
	\end{aligned}
\end{equation}

The auxiliary variable $\bm z$ and Lagrangian multipliers $\bm \alpha_i$ can be updated as follows:
\begin{equation} \label{eq:z_alpha_update}
	\begin{gathered}
		\bm z^{k+1} = \frac{\rho}{2+\rho m} \sum_{i=1}^m (\bm w_i^{k+1} + \frac{1}{\rho}\bm \alpha_i^k) \\
		\bm \alpha_i^{k+1} = \bm \alpha_i^k + \rho(\bm w_i^{k+1} - \bm z^{k+1}) \\
	\end{gathered}
\end{equation}

The stop condition of consensus ADMM is determined by the (squared) norm of the primal and dual residuals of the $k\nth$ iteration, which are calculated as follows:
\begin{equation}
	\begin{gathered}
		\norm{r_k}_2^2 = \sum_{i=1}^m \norm{\bm w_i^k - \bm z^k}_2^2 \\
		\norm{s_k}_2^2 = m \rho^2 \norm{\bm z^{k} - \bm z^{k-1}}_2^2 \\
	\end{gathered}
\end{equation}

After the weight parameter $\bm w_i$ for each batch has been updated, the instance weight vector $\bm v_i$ for each batch will be updated based on the fixed $\bm w_i$ by solving the following problem: 
\begin{equation} \label{eq:problem_v}
	\begin{aligned}
		\bm v_i^{t+1} = \argminA_{\bm v_i} \sum_{j=1}^{n_i} v_{ij} \mathcal{L}(y_{ij}, g(\bm w_i^{t+1}, \bm x_{ij})) - \lambda \sum_{j=1}^{n_i} v_{ij}
	\end{aligned}
\end{equation}

For the above problem in Equation \eqref{eq:problem_v}, we always obtain the following closed-form solution: 
\begin{equation}
	\begin{aligned}
		\bm v^{t+1}_i =& \mathbbm{1}\bigg(\mathcal{L} \big(y_{ij}, g(\bm w_i^{t+1}, \bm x_{ij})\big) < \lambda \bigg)
	\end{aligned}
\end{equation}
where $\mathbbm{1}(\cdot)$ is the indicator function whose value equals to one when the condition $\mathcal{L} \big(y_{ij}, g(\bm w_i^{t+1}, \bm x_{ij})\big) < \lambda$ is satisfied; otherwise, its value is zero.

The details of algorithm \textit{DSPL} are presented in Algorithm \ref{algo:dspl}. In Lines 1-2, the variables and parameters are initialized. With the variables $\bm v_i$ fixed, the other variables are optimized in Lines 5-13 based on the result of consensus ADMM, in which both the model weights $\bm w_i$ and Lagrangian multipliers $\bm \alpha_i$ can be updated in parallel for each batch. Note that if no closed-form can be found for Equation \eqref{eq:problem_w}, the updating of $\bm w_i$ is the most time-consuming operation in the algorithm. Therefore, updating $\bm w_i$ in parallel can significantly improve the efficiency of the algorithm. The variable $\bm v_i$ for each batch is updated in Line 14, with the variable $\bm w_i$ fixed. In Lines 15-18, the parameter $\lambda$ is enlarged to include more data instances into the training set. $\tau_\lambda$ is the maximum threshold for $\lambda$ and $\mu$ is the step size. The algorithm will be stopped when the Lagrangian is converged in Line 20. The following two alternative methods can be applied to improve the efficiency of the algorithm: 1) dynamically update the penalty parameter $\rho$ after Line 11. When $r > 10s$, we can update $\rho \gets 2\rho$. When $10r < s$, we have $\rho \gets \rho / 2$. 2) Move the update of variable $\bm v_i$ into the consensus ADMM step after Line 9. This ensures that the instance weights are updated every time the model is updated, so that the algorithm quickly converges. However, no theoretical convergence guarantee can be made for the two solutions, although in practice they do always converge.

\begin{algorithm2e}[t]
	\small
	\DontPrintSemicolon 
	\KwIn{$X \in \mathbbm{R}^{p \times n}$, $\bm y \in \mathbbm{R}^{n}$, $\lambda_0 \in \mathbbm{R}$, $\tau_\lambda \in \mathbbm{R}$, $\mu \in \mathbbm{R}$}
	\KwOut{solution $\bm w^{(t+1)}$, $\bm v^{(t+1)}$}
	Initialize $\bm w_i^{0} = \bm 1$, $\bm v_i^{0} = \bm 1$ \\
	Choose $\varepsilon_L > 0$, $\varepsilon_r > 0$, $\varepsilon_s > 0$, $\lambda \leftarrow \lambda_0$, $t \leftarrow 0$ \\
	\Repeat{$\norm{L^{t+1} - L^{t}}_2 < \varepsilon_L$}
	{
		$k \leftarrow 0$ \\
		\Repeat{$\norm{r^{k+1}}_2^2 < \varepsilon_r$ and $\norm{s^{k+1}}_2^2 < \varepsilon_s$}
		{			
			$\bm z^{k+1} \leftarrow \frac{1}{m} \sum_{i=1}^m (\bm w_i^{k+1} + \frac{1}{\rho}\bm \alpha_i^k)$ \\
			Update variables $\bm w_i^{k+1}$ in parallel, for $\forall i = 1 \dots m$ \\
			\ \ \ \ \ \ \ \ \ $\bm w_i^{k+1} \leftarrow \argminA f_i(\bm w_i, \bm v_i) + $ \\ 
			\ \ \ \ \ \ \ \ \ \ \ \ \ \ \ \ \ \ \ \ \ \ \ \ \ \ \ \ \ \ \ \ \ \ \ \ \ \ \ $ [\bm \alpha_i^k]^T(\bm w_i - \bm z^k) + \frac{\rho}{2} \norm{\bm w_i - \bm z^k}_2^2$ \\
			Update dual $\bm \alpha_i^{k+1}$ $\leftarrow \bm \alpha_i^k + \rho(\bm w_i^{k+1} - \bm z^{k+1})$ in parallel \\
			Update primal and dual residuals $r^{k+1}$ and $s^{k+1}$. \\
			$k \leftarrow k + 1$
		}
		$\bm v^{t+1}_i \leftarrow \mathbbm{1}\bigg(\mathcal{L} \big(y_{ij}, g(\bm w_i^{t+1}, \bm x_{ij})\big) < \lambda \bigg)$, for $\forall i = 1 \dots m$\\
		\uIf{$\lambda < \tau_\lambda$}{$\lambda \leftarrow \lambda * \mu$}
		\uElse{$\lambda \leftarrow \tau_\lambda$}
		$t \leftarrow t + 1$ \\
	}
	\textbf{return} $\bm z^{t+1}$, $\bm v^{t+1}$
	\caption{{\sc Dspl Algorithm}}
	\label{algo:dspl}
\end{algorithm2e}

\section{Theoretical Analysis}\label{section:analysis}
In this section, we will prove the convergence of the proposed algorithm.
Before we start to prove the convergence of Algorithm \ref{algo:dspl}, we make the following assumptions regarding our objective function and penalty parameter $\rho$:

\begin{assumption} [Gradient Lipchitz Continuity] There exists a positive constant $\varphi_i$ for objective function $f_i(\bm w_i)$ of each batch with the following properties:
	\begin{equation}
		\begin{gathered}
			\norm{\triangledown_{\bm w_i} f_i(\bm x_i) - \triangledown_{\bm w_i} f_i(\bm y_i)} \le \varphi_i \norm{\bm x_i - \bm y_i}, \\
			\forall \bm x_i, \bm y_i, i = 1, \dots, m
		\end{gathered}
	\end{equation}

\end{assumption}

\begin{assumption} [Lower Bound] The objective function in problem \eqref{eq:relax_problem} has a lower bound $\mathcal{B}$ as follows:
\begin{equation}
	\begin{gathered}
		\mathcal{B} = \min_{\bm w_i, \bm v_i, \bm z} \bigg \{ \sum_{i=1}^m f_i(\bm w_i, \bm v_i) + \norm{\bm z}_2^2 \bigg \} > - \infty
	\end{gathered}
\end{equation}

\end{assumption}

\begin{assumption} [Penalty Parameter Constraints]
For $\forall i = 1 \dots m$, the penalty parameter $\rho_i$ for each batch is chosen in accord with the following constraints:

\begin{itemize}
	\item For $\forall i$, the subproblem \eqref{eq:problem_w} of variable $\bm w_i$ is strongly convex with modulus $\gamma_i(\rho_i)$. 
	\item For $\forall i$, we have $\rho_i \gamma_i(\rho_i) > 2\varphi_i^2$ and $\rho_i \ge \varphi_i$.
\end{itemize}

\end{assumption}
Note that when $\rho_i$ increases, subproblem \eqref{eq:problem_w} will be eventually become strongly convex with respect to variable $\bm w_i$. For simplicity, we will choose the same penalty parameter $\rho$ for all the batches with $\rho = \max_i (\rho_i)$. Based on these assumptions, we can draw the following conclusions.

\begin{lemma}

Assume the augmented Lagrangian format of optimization problem \eqref{eq:relax_problem} satisfies Assumption 1, the augmented Lagrangian $L$ has the following property:
\begin{equation} \label{eq:lemma1}
	\begin{aligned}
		L(\{\bm w_i^{k+1}\}, \bm z^{k+1}, \bm \alpha^{k+1}) \le L(\{\bm w_i^{k}\}, \bm z^{k}, \bm \alpha^{k})
	\end{aligned}
\end{equation}

\end{lemma}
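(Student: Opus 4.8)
The plan is to establish the descent inequality \eqref{eq:lemma1} by decomposing the one-step change in the augmented Lagrangian into three pieces corresponding to the three block updates performed within a single consensus-ADMM iteration: the $\bm z$-update, the parallel $\{\bm w_i\}$-updates, and the dual $\{\bm \alpha_i\}$-updates. Writing
\begin{align*}
  L(\{\bm w_i^{k+1}\},\bm z^{k+1},\bm \alpha^{k+1}) - L(\{\bm w_i^{k}\},\bm z^{k},\bm \alpha^{k})
  &= \underbrace{\big[L(\{\bm w_i^{k}\},\bm z^{k+1},\bm \alpha^{k}) - L(\{\bm w_i^{k}\},\bm z^{k},\bm \alpha^{k})\big]}_{\text{(a) }\bm z\text{-step}} \\
  &\quad + \underbrace{\big[L(\{\bm w_i^{k+1}\},\bm z^{k+1},\bm \alpha^{k}) - L(\{\bm w_i^{k}\},\bm z^{k+1},\bm \alpha^{k})\big]}_{\text{(b) }\bm w\text{-step}} \\
  &\quad + \underbrace{\big[L(\{\bm w_i^{k+1}\},\bm z^{k+1},\bm \alpha^{k+1}) - L(\{\bm w_i^{k+1}\},\bm z^{k+1},\bm \alpha^{k})\big]}_{\text{(c) dual step}},
\end{align*}
I would bound each term in turn. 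For (a), note that $L$ is strongly convex in $\bm z$ (the $\norm{\bm z}_2^2$ term plus $\frac{\rho}{2}\sum_i\norm{\bm w_i-\bm z}_2^2$ gives modulus $2 + m\rho$), and $\bm z^{k+1}$ is its exact minimizer given by \eqref{eq:z_alpha_update}; hence this difference is at most $-\tfrac{2+m\rho}{2}\norm{\bm z^{k+1}-\bm z^k}_2^2 \le 0$. For (b), by Assumption 3 each $\bm w_i$-subproblem \eqref{eq:problem_w} is strongly convex with modulus $\gamma_i(\rho_i)$ and $\bm w_i^{k+1}$ is its exact minimizer, so term (b) is bounded above by $-\sum_i \tfrac{\gamma_i}{2}\norm{\bm w_i^{k+1}-\bm w_i^k}_2^2 \le 0$.

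The crux is term (c): the dual ascent \emph{increases} $L$, since $L$ is linear in $\bm \alpha$, giving exactly $\sum_i (\bm \alpha_i^{k+1}-\bm \alpha_i^k)^T(\bm w_i^{k+1}-\bm z^{k+1}) = \tfrac{1}{\rho}\sum_i \norm{\bm \alpha_i^{k+1}-\bm \alpha_i^k}_2^2$ using the dual update rule $\bm \alpha_i^{k+1}-\bm \alpha_i^k = \rho(\bm w_i^{k+1}-\bm z^{k+1})$. This positive quantity must be absorbed by the negative contribution of the $\bm w$-step. The standard device is to use the optimality (first-order) condition of the $\bm w_i$-subproblem, $\triangledown_{\bm w_i} f_i(\bm w_i^{k+1}) + \bm \alpha_i^k + \rho(\bm w_i^{k+1}-\bm z^k) = 0$, which at consecutive iterations together with the dual update yields $\bm \alpha_i^{k+1} = -\triangledown_{\bm w_i} f_i(\bm w_i^{k+1})$. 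Then $\norm{\bm \alpha_i^{k+1}-\bm \alpha_i^k}_2 = \norm{\triangledown_{\bm w_i} f_i(\bm w_i^{k+1}) - \triangledown_{\bm w_i} f_i(\bm w_i^{k})}_2 \le \varphi_i \norm{\bm w_i^{k+1}-\bm w_i^k}_2$ by Assumption 1, so term (c) $\le \sum_i \tfrac{\varphi_i^2}{\rho}\norm{\bm w_i^{k+1}-\bm w_i^k}_2^2$.

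Combining the three bounds, the net change is at most $\sum_i \big(\tfrac{\varphi_i^2}{\rho} - \tfrac{\gamma_i}{2}\big)\norm{\bm w_i^{k+1}-\bm w_i^k}_2^2 - \tfrac{2+m\rho}{2}\norm{\bm z^{k+1}-\bm z^k}_2^2$, and the condition $\rho_i\gamma_i(\rho_i) > 2\varphi_i^2$ from Assumption 3 (with $\rho = \max_i \rho_i \ge \rho_i$) makes each coefficient of $\norm{\bm w_i^{k+1}-\bm w_i^k}_2^2$ negative, so the whole expression is $\le 0$, establishing \eqref{eq:lemma1}. I expect the main obstacle to be handling term (c) cleanly — specifically, justifying the identity $\bm \alpha_i^{k+1} = -\triangledown_{\bm w_i} f_i(\bm w_i^{k+1})$ across iterations (it requires the $\bm z$-update to have occurred between the two $\bm w$-updates, so one must be careful that the $\bm z^k$ appearing in the optimality condition at step $k$ is consistent) and making sure the book-keeping of which variables are at iterate $k$ versus $k+1$ in the telescoping decomposition is airtight; the $\bm z$- and $\bm w$-steps are routine strong-convexity arguments by comparison.
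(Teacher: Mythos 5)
Your proof takes essentially the same route as the paper: the paper's entire argument is to invoke Lemma 2.2 of \cite{hong2016convergence}, which yields exactly your final estimate $\sum_{i=1}^m\big(\tfrac{\varphi_i^2}{\rho}-\tfrac{\gamma_i(\rho)}{2}\big)\norm{\bm w_i^{k+1}-\bm w_i^k}_2^2-\tfrac{\gamma}{2}\norm{\bm z^{k+1}-\bm z^k}_2^2\le 0$ under the penalty-parameter assumption, while you simply re-derive that estimate via the standard $z$-step/$w$-step/dual-step decomposition together with the identity $\bm\alpha_i^{k+1}=-\triangledown_{\bm w_i}f_i(\bm w_i^{k+1})$. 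The ordering subtlety you flag is real but stems from the paper's own inconsistency (Equation \eqref{eq:problem_w} uses $\bm z^k$, whereas the optimality condition used in Lemma 2 and the algorithm listing presuppose updating $\bm z$ before $\bm w_i$); resolving it in favor of the $z$-first order, as your decomposition does, is exactly what the cited analysis requires.
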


\begin{proof}

Since the the objective function $f_i(\bm w_i)$ for each batch is gradient Lipchitz continuous with a positive constant $\varphi_i$, the Lagrangian in Equation \eqref{eq:lag} has the following property according to Lemma 2.2 in \cite{hong2016convergence}:
\begin{equation} \label{eq:lemma1}
	\begin{aligned}
		&L(\{\bm w_i^{k+1}\}, \bm z^{k+1}, \bm \alpha^{k+1}) - L(\{\bm w_i^{k}\}, \bm z^{k}, \bm \alpha^{k}) \\
		&\le \sum_{i=1}^m \bigg( \frac{\varphi_i^2}{\rho} - \frac{\gamma_i(\rho)}{2} \bigg) \norm{\bm w_i^{k+1} - \bm w_i^k}_2^2 - \frac{\gamma}{2}\norm{\bm z^{k+1} - \bm z^{k}}_2^2 \\
		&\stackrel{(a)}{\le} - \frac{\gamma}{2}\norm{\bm z^{k+1} - \bm z^{k}}_2^2 \le 0
	\end{aligned}
\end{equation}
where $\gamma = m\rho > 0$. The inequality (a) follows from Assumption 2, namely that $\rho \gamma_i(\rho) > 2 \varphi_i^2$, so we have $\bigg( \frac{\varphi_i^2}{\rho} - \frac{\gamma_i(\rho)}{2} \bigg) < 0$.
\end{proof}

\begin{lemma}
Assume the augmented Lagrangian of problem \eqref{eq:relax_problem} satisfies Assumptions 1-3, the augmented Lagrangian $L$ is lower bounded as follows:
\begin{equation} \label{eq:lemma2}
	\begin{aligned}
		\lim_{k \rightarrow \infty}	L(\{\bm w_i^{k+1}\}, \bm z^{k+1}, \bm \alpha^{k+1}) \ge \mathcal{B}
	\end{aligned}
\end{equation}
where $\mathcal{B}$ is the lower bound of the objective function in problem \eqref{eq:relax_problem}.
\end{lemma}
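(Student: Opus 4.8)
The plan is to prove the stronger \emph{pointwise} estimate $L(\{\bm w_i^{k+1}\},\bm z^{k+1},\bm \alpha^{k+1}) \ge \mathcal{B}$ for every iteration $k$, which immediately implies \eqref{eq:lemma2} (and, together with Lemma~1, shows that $\{L^k\}$ converges). I would start from the first-order optimality condition of the $\bm w_i$-subproblem \eqref{eq:problem_w}: because $\bm w_i^{k+1}$ minimizes $f_i(\bm w_i,\bm v_i)+[\bm \alpha_i^k]^T(\bm w_i-\bm z)+\frac{\rho}{2}\norm{\bm w_i-\bm z}_2^2$, its gradient vanishes there, and combined with the dual update $\bm \alpha_i^{k+1}=\bm \alpha_i^k+\rho(\bm w_i^{k+1}-\bm z^{k+1})$ this gives the key identity $\triangledown_{\bm w_i} f_i(\bm w_i^{k+1})=-\bm \alpha_i^{k+1}$. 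Substituting it into the augmented Lagrangian \eqref{eq:lag} eliminates the multipliers and yields
\[
\begin{aligned}
L^{k+1} &= \sum_{i=1}^m\Big(f_i(\bm w_i^{k+1},\bm v_i) - [\triangledown_{\bm w_i} f_i(\bm w_i^{k+1})]^T(\bm w_i^{k+1}-\bm z^{k+1})\Big)\\
&\quad + \frac{\rho}{2}\sum_{i=1}^m\norm{\bm w_i^{k+1}-\bm z^{k+1}}_2^2 + \norm{\bm z^{k+1}}_2^2 .
\end{aligned}
\]

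The next step is the descent inequality implied by Assumption~1: since $\triangledown_{\bm w_i} f_i$ is $\varphi_i$-Lipschitz, $f_i(\bm y)\le f_i(\bm x)+[\triangledown_{\bm w_i} f_i(\bm x)]^T(\bm y-\bm x)+\frac{\varphi_i}{2}\norm{\bm y-\bm x}_2^2$ for all $\bm x,\bm y$; choosing $\bm x=\bm w_i^{k+1}$, $\bm y=\bm z^{k+1}$ and rearranging shows that the $i$-th summand above is at least $f_i(\bm z^{k+1},\bm v_i)-\frac{\varphi_i}{2}\norm{\bm w_i^{k+1}-\bm z^{k+1}}_2^2$. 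Plugging this in, the residual terms collect into $\sum_{i=1}^m\big(\frac{\rho}{2}-\frac{\varphi_i}{2}\big)\norm{\bm w_i^{k+1}-\bm z^{k+1}}_2^2$, which is nonnegative because $\rho\ge\varphi_i$ by Assumption~3. Therefore
\[
L^{k+1} \ge \sum_{i=1}^m f_i(\bm z^{k+1},\bm v_i) + \norm{\bm z^{k+1}}_2^2 \ge \mathcal{B},
\]
where the last step is Assumption~2 evaluated at the feasible point with every $\bm w_i$ and $\bm z$ equal to $\bm z^{k+1}$ and the weights kept at their current values $\bm v_i$. Since this holds for all $k$, passing to the limit gives \eqref{eq:lemma2}.

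The step I expect to be the main obstacle is making the identity $\triangledown_{\bm w_i} f_i(\bm w_i^{k+1})=-\bm \alpha_i^{k+1}$ rigorous: it is clean only when the $\bm z$ in the $\bm w_i$-subproblem is the same iterate used in the dual step, whereas \eqref{eq:problem_w} is written against $\bm z^k$ while the dual update uses $\bm z^{k+1}$. I would first reconcile the indexing --- consistent with Algorithm~\ref{algo:dspl}, where $\bm z$ is refreshed before $\bm w_i$, this just means using $\bm z^{k+1}$ throughout; otherwise an extra term $\rho(\bm z^k-\bm z^{k+1})$ survives in the identity and must be absorbed using the summability of $\norm{\bm z^{k+1}-\bm z^k}_2^2$ coming from Lemma~1. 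I would also make explicit that the weights $\bm v_i$ are frozen during the inner ADMM loop, so $f_i(\cdot,\bm v_i)$ is a function of $\bm w_i$ alone there and the descent inequality applies as stated, while the minimization defining $\mathcal{B}$ in Assumption~2 ranges over $\bm v_i$ as well, so the feasible-point comparison is legitimate.
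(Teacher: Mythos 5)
Your proposal follows essentially the same route as the paper's own proof: the optimality condition of the $\bm w_i$-subproblem combined with the dual update gives $\triangledown_{\bm w_i} f_i(\bm w_i^{k+1})=-\bm\alpha_i^{k+1}$, the Lagrangian is rewritten to eliminate the multipliers, the descent inequality from the Lipschitz-gradient assumption plus $\rho\ge\varphi_i$ absorbs the residual terms, and Assumption 2 yields the bound $\mathcal{B}$ at every iteration. In fact you are slightly more explicit than the paper about where $\rho\ge\varphi_i$ is needed and about the $\bm z^k$ versus $\bm z^{k+1}$ indexing in the subproblem, but the argument is the same.
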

\begin{proof}
Due to the gradient Lipchitz continuity assumption, we have the following optimality condition for the $\bm w_i$ update step in Equation \eqref{eq:problem_w}:
\begin{equation}
	\begin{aligned}
		\triangledown_{\bm w_i} f_i(\bm w_i^{k+1}) + \bm \alpha_i^k + \rho(\bm w_i^{k+1} - \bm z^{k+1}) = 0, \ \forall i = 1 \dots m
	\end{aligned}
\end{equation}
Combined with the update of the Lagrangian multipliers $\bm \alpha_i$ in Equation \eqref{eq:z_alpha_update}, we have 
\begin{equation}
	\begin{aligned}
		\triangledown_{\bm w_i} f_i(\bm w_i^{k+1}) = - \bm \alpha_i^{k+1}, \ \forall i = 1 \dots m
	\end{aligned}
\end{equation}
The augmented Lagrangian can be represented as:
\begin{equation} \label{eq:lemma2_fact1}
	\begin{aligned}
		&L(\{\bm w_i^{k+1}\}, \bm z^{k+1}, \bm \alpha^{k+1}) \\
		&\stackrel{(a)}{=}\norm{\bm z^{k+1}}_2^2 + \sum_{i=1}^m \bigg( f_i(\bm w_i^{k+1}) + \triangledown_{\bm w_i} f_i(\bm w_i^{k+1}) \cdot  \\
		&\ \ \ \ \ \ \ \ \ \ \ \ \ \ \ \ \ \ \ \ \ \ \ \ \ (\bm z^{k+1} - \bm w_i^{k+1})+ \frac{\rho}{2}\norm{\bm w_i^{k+1} - \bm z^{k+1}}_2^2 \bigg) \\	
		&\stackrel{(b)}{\ge} \norm{\bm z^{k+1}}_2^2 + \sum_{i=1}^m f_i(\bm z^{k+1}) \stackrel{(c)}{\ge} \mathcal{B} 
	\end{aligned}
\end{equation}
Equation (a) follows from Equation \eqref{eq:lemma2_fact1} and the inequality (b) comes from the Lipschitz continuity property in Assumption 1. The inequality (c) follows from the lower bound property in Assumption 2.
	
\end{proof}

\begin{theorem}
	The Algorithm \ref{algo:dspl} converges when Assumptions 1-3 are all satisfied.
\end{theorem}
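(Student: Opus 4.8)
The plan is to combine the two lemmas into a monotone–convergence argument and then lift it from the inner consensus–ADMM loop to the outer self‑paced loop. First, for a frozen instance‑weight vector $\bm v=\{\bm v_i\}$ and a frozen pace parameter $\lambda$, Lemma~1 shows that the augmented Lagrangian $L(\{\bm w_i^k\},\bm z^k,\bm\alpha^k)$ is non‑increasing along the ADMM iterations, while Lemma~2 shows it is bounded below by $\mathcal{B}$. A monotone bounded sequence converges, so the inner loop drives $L$ to a limit; moreover, the per‑step decrease inequality established inside the proof of Lemma~1 (with strictly negative coefficients in front of $\|\bm w_i^{k+1}-\bm w_i^k\|_2^2$ and $\|\bm z^{k+1}-\bm z^k\|_2^2$) forces $\|\bm z^{k+1}-\bm z^k\|_2\to 0$ and $\|\bm w_i^{k+1}-\bm w_i^k\|_2\to 0$. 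From the optimality identity $\triangledown_{\bm w_i} f_i(\bm w_i^{k+1})=-\bm\alpha_i^{k+1}$ derived in the proof of Lemma~2 together with Assumption~1, $\|\bm\alpha_i^{k+1}-\bm\alpha_i^k\|\le\varphi_i\|\bm w_i^{k+1}-\bm w_i^k\|\to 0$; since $\bm\alpha_i^{k+1}-\bm\alpha_i^k=\rho(\bm w_i^{k+1}-\bm z^{k+1})$ this makes the primal residual $r^k$ vanish, and $\bm z^{k+1}-\bm z^k\to 0$ makes the dual residual $s^k$ vanish. Hence for any positive tolerances $\varepsilon_r,\varepsilon_s$ the inner stopping test is satisfied after finitely many steps.

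Second, I would verify that $L$ keeps decreasing across one full outer iteration. The $\bm v$‑update in Line~14 solves, coordinatewise, $\min_{v_{ij}\in[0,1]} v_{ij}\bigl(\mathcal{L}(y_{ij},g(\bm w_i,\bm x_{ij}))-\lambda\bigr)$, so the closed‑form (indicator) solution is a global minimizer of $L$ over the $\bm v$ block with all other blocks fixed, and this step cannot increase $L$. The subsequent pace update replaces $\lambda$ by $\mu\lambda\ge\lambda$ (clipped at $\tau_\lambda$); because every $v_{ij}\ge 0$, the only $\lambda$‑dependent term $-\lambda\sum_{ij}v_{ij}$ can only decrease, so this step cannot increase $L$ either. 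Re‑entering the ADMM loop with the new $\bm v$ and $\lambda$ again does not increase $L$ by Lemma~1 — here I would remark that Assumptions~1 and~3 can be taken to hold uniformly over all admissible $\bm v_i\in[0,1]^{n_i}$: for the squared‑loss linear model the Hessian $2\sum_j v_{ij}\bm x_{ij}\bm x_{ij}^T$ has operator norm at most $2\sum_j\|\bm x_{ij}\|^2$ independently of $\bm v_i$, and after adding $\rho I$ it stays strongly convex, so a single choice of $\varphi_i$ and $\gamma_i(\rho)$ works throughout.

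Third, I would assemble the global statement: the values of $L$ taken after every inner ADMM step and after every $\bm v$‑ and $\lambda$‑update form one non‑increasing sequence. It is bounded below uniformly because $\lambda\le\tau_\lambda$ gives $f_i(\bm w_i,\bm v_i;\lambda)=\sum_j v_{ij}(\mathcal{L}(y_{ij},g(\bm w_i,\bm x_{ij}))-\lambda)\ge-\tau_\lambda n_i$, so $L\ge\sum_{i=1}^m(-\tau_\lambda n_i)>-\infty$ (equivalently, invoke $\mathcal{B}$ of Assumption~2 evaluated at $\lambda=\tau_\lambda$). A non‑increasing sequence bounded below converges, hence $L^{t+1}-L^{t}\to 0$ and the outer stopping test in Line~3 is met after finitely many outer iterations; combined with finite termination of every inner loop, this establishes that Algorithm~\ref{algo:dspl} converges.

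The part I expect to be the real obstacle is the bookkeeping that glues the inner and outer loops together: Lemmas~1–2 are proved with $\bm v$ and $\lambda$ frozen, so I must argue carefully that (i) the closed‑form $\bm v$‑update and (ii) the increase of $\lambda$ are both monotone moves on the \emph{same} potential $L$, and (iii) the Lipschitz and strong‑convexity constants granted by Assumptions~1 and~3 are uniform in $\bm v$, so Lemma~1's descent inequality can be reapplied verbatim at every outer round. Once these three points are secured, the theorem reduces to the monotone convergence theorem for $L$ plus the residual‑to‑zero estimates for termination. A secondary point worth stating explicitly is that ``convergence'' here means convergence of the Lagrangian value and of the residuals to $0$, not convergence to a global optimum, which is unavoidable given the NP‑hardness of the biconvex problem.
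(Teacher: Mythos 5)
Your proposal follows essentially the same route as the paper's proof: Lemma~1 for the inner ADMM loop, the block-minimizing $\bm v$-update, and the monotone increase of $\lambda$ acting on the nonnegative term $-\lambda\sum_{i,j}v_{ij}$ together make the augmented Lagrangian non-increasing across both loops, and the $\tau_\lambda$-capped lower bound (the paper's $\mathcal{B}+C-\tau_\lambda n$) then yields convergence by the monotone convergence argument. Your extra care about finite termination of the inner loop via vanishing residuals and about the uniformity of the Lipschitz and strong-convexity constants in $\bm v$ goes somewhat beyond what the paper writes down, but it does not alter the approach.
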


\begin{proof}
In Lemmas 1 and 2, we proved that the Lagrangian is monotonically decreasing and has a lower bound during the iterations of ADMM. Now we will prove that the same properties hold for the entire algorithm after updating variable $\bm v$ and parameter $\lambda$.
\begin{align*}
	&L(\{\bm w^{t+1}\}, \bm v^{t+1}, \bm z^{t+1}, \bm \alpha^{t+1}; \lambda^{t+1}) \\
	&\stackrel{(a)}{\le} L(\{\bm w^{t}\}, \bm v^{t+1}, \bm z^{t}, \bm \alpha^{t}; \lambda^{t+1}) \stackrel{(b)}{\le} L(\{\bm w^{t}\}, \bm v^{t}, \bm z^{t}, \bm \alpha^{t}; \lambda^{t+1})\\
	&{=} L(\{\bm w^{t}\}, \bm v^{t}, \bm z^{t}, \bm \alpha^{t}; \lambda^{t}) + (\lambda^{t} - \lambda^{t+1}) \sum_{i=1}^m \sum_{j=1}^{n_i} v_{ij}^t \\
	&\stackrel{(c)}{\le} L(\{\bm w^{t}\}, \bm v^{t}, \bm z^{t}, \bm \alpha^{t}; \lambda^{t})
\end{align*}
Inequality (a) follows Lemma 1 and inequality (b) follows the optimization step in Line 14 in Algorithm \ref{algo:dspl}. Inequality (c) follows from the fact that $\lambda$ increases monotonically so that $\lambda^t \le \lambda^{t+1}$.
As $L(\{\bm w^{t+1}\}, \bm z^{t+1}, \bm \alpha^{t+1})$ for some constant values of $\bm v$ and $\lambda$ has a lower bound $\mathcal{B}$, we can easily prove that $L(\{\bm w^{t+1}\}, \bm v^{t+1}, \bm z^{t+1}, \bm \alpha^{t+1}; \lambda^{t+1}) \ge \mathcal{B} + C - \tau_\lambda n$, where $C$ is a constant and $n$ is the size of the entire dataset. Therefore, the Lagrangian $L$ is convergent. According to the stop condition for Algorithm \ref{algo:dspl}, the algorithm converges when the Lagrangian $L$ is converged.

\end{proof}

\section{Experimental Results}\label{section:experiment}

In this section, the performance of the proposed algorithm \textit{DSPL} is evaluated for both synthetic and real-world datasets in robust regression task. After the experimental setup has been introduced in Section \ref{section:exp_setup}, we present the results for the regression coefficient recovery performance with different settings using synthetic data in Section \ref{section:regression_recovery}, followed by house rental price prediction evaluation using real-world data in Section \ref{section:airbnb}. All the experiments were performed on a 64-bit machine with an Intel(R) Core(TM) quad-core processor (i7CPU@3.6GHz) and 32.0GB memory. Details of both the source code and the datasets used in the experiment can be downloaded here\footnote{{https://goo.gl/cis7tK}}.
\begin{figure*}[ht]
	\centering
	\scalebox{0.99}{
		\subfigure[p=100, n=10K, b=10, dense noise]{%
			\label{fig:beta_1} 
			\includegraphics[trim=0.6cm 0.1cm 0.6cm 0.1cm,width=0.31\linewidth]{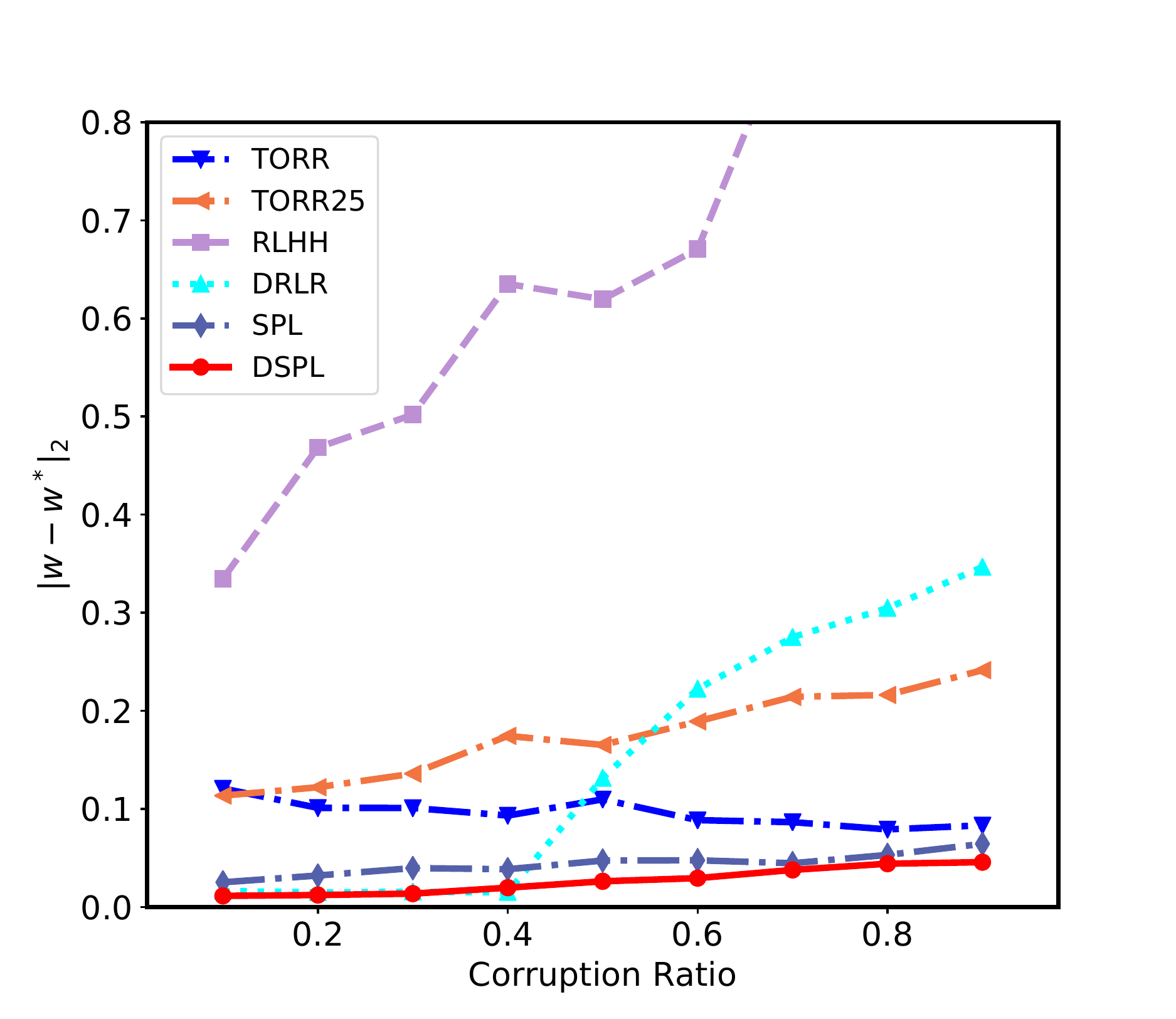}
	}}
	\scalebox{0.99}{
		\subfigure[p=400, n=10K, b=10, dense noise]{%
			\label{fig:beta_2}
			\includegraphics[trim=0.6cm 0.1cm 0.6cm 0.1cm,width=0.31\linewidth]{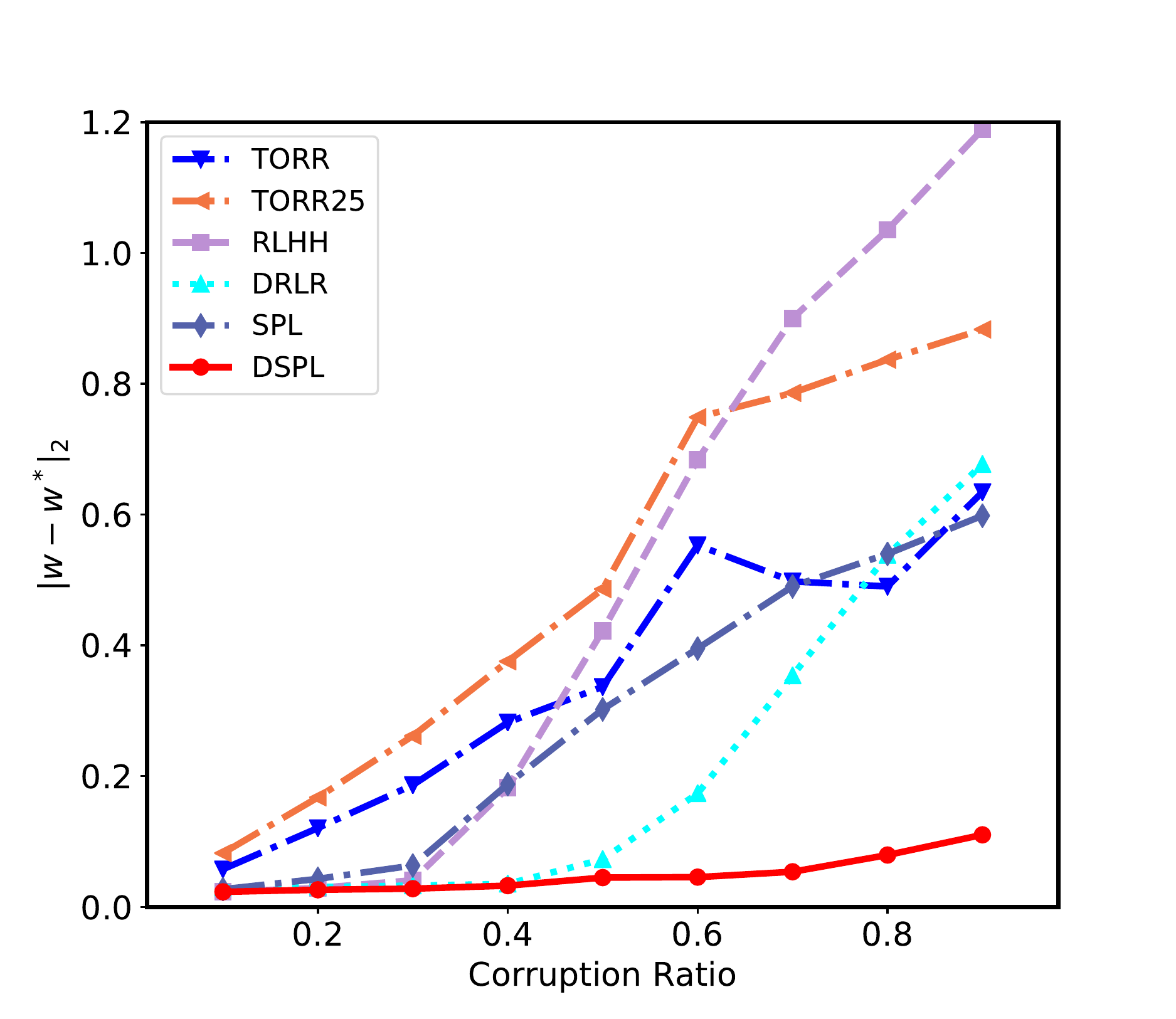}
	}}
	\scalebox{0.99}{
		\subfigure[p=100, n=50K, b=10, dense noise]{%
			\label{fig:beta_3}
			\includegraphics[trim=0.6cm 0.1cm 0.6cm 0.1cm,width=0.31\linewidth]{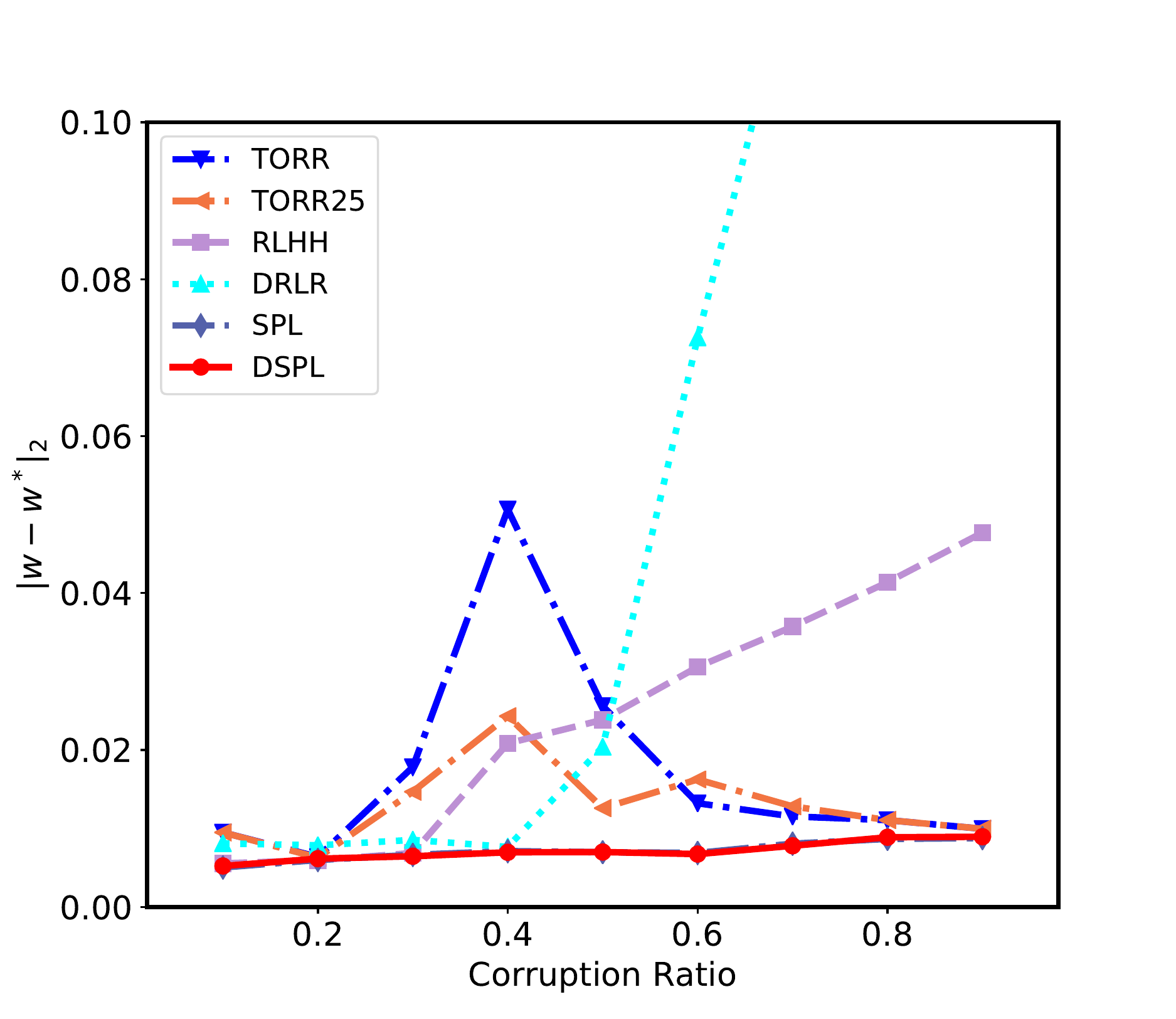}
	}}
	
	\caption{%
		\small Regression coefficient recovery performance for different corruption ratios.
	}%
	\label{fig:beta}
\end{figure*}

\begin{figure}[h]
	\centering
	\scalebox{0.8}{
		\includegraphics[trim=1cm 0.1cm 1cm 0.1cm,width=0.95\linewidth]{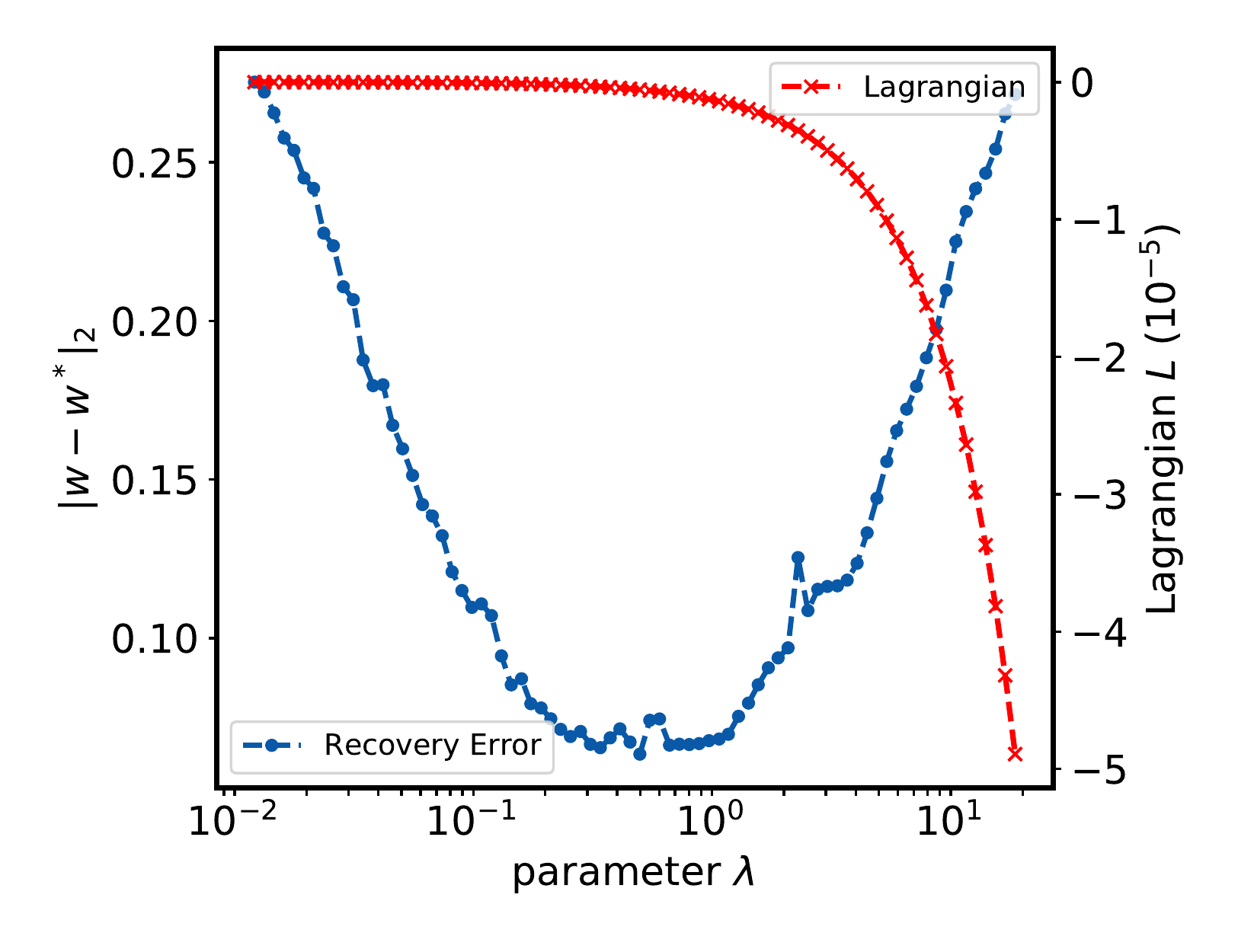}
	}

	\caption{%
		\small Relationship between parameter $\lambda$ and coefficient recovery error and the corresponding Lagrangian.
	}%
	\label{fig:coeff_lag}
\end{figure}

\subsection{Experimental Setup} \label{section:exp_setup}
\subsubsection{Datasets and Labels} \label{section:dataset}
The datasets used for the experimental verification were composed of synthetic and real-world data. The simulation samples were randomly generated according to the model $\bm y^{(i)} = [X^{(i)}]^T \bm w_* + \bm u ^{(i)} + \bm \varepsilon^{(i)}$ for each mini-batch, where $\bm w_*$ represents the ground truth coefficients and $\bm u^{(i)}$ the adversarial corruption vector. $\varepsilon^{(i)}$ represents the additive dense noise for the $i\nth$ batch, where $\varepsilon_j^{(i)} \sim \mathcal{N}(0, \sigma^2)$. We sampled the regression coefficients $\bm w_* \in \mathbbm{R}^p$ as a random unit norm vector. The covariance data $X^{(i)}$ for each mini-batch was drawn independently and identically distributed from $\bm x_i \sim \mathcal{N}(0, I_p)$ and the uncorrupted response variables were generated as $\bm y^{(i)}_* = {\big[\bm X^{(i)}\big]}^T \bm w_* + \bm \varepsilon^{(i)}$. The corrupted response vector for each mini-batch was generated as $\bm y^{(i)} = \bm y^{(i)}_* + \bm u^{(i)}$, where the corruption vector $\bm u^{(i)}$ was sampled from the uniform distribution $[-5\|\bm y^{(i)}_*\|_\infty, 5\|\bm y^{(i)}_*\|_\infty]$. The set of uncorrupted points was selected as a uniformly random subset of $[n_i]$ for each batch. 

The real-world datasets utilized consisted of house rental transaction data from two cities, \textit{New York City} and \textit{Los Angeles} listed on the Airbnb\footnote{https://www.airbnb.com/} website from January 2015 to October 2016. These datasets can be publicly downloaded\footnote{{http://insideairbnb.com/get-the-data.html}}. For the \textit{New York City} dataset, the first 321,530 data samples from January 2015 to December 2015 were used as the training data and the remaining 329,187 samples from January to October 2016 as the test data. For the \textit{Los Angeles} dataset, the first 106,438 samples from May 2015 to May 2016 were used as training data, and the remaining 103,711 samples as test data. In each dataset, there were 21 features after data preprocessing, including the number of beds and bathrooms, location, and average price in the area.
\begin{table}[t]
	\caption{Regression Coefficient Recovery Performance for\ \ \ \ \ \ \ \ \ \ \ \ \ \ \ \ \ \ \ \ \ \ \ \ \ \ \ \ \ Different Corrupted Batches}
	\centering
	\small
	\label{table:batchcorr}
	
	\scalebox{1}{
		\begin{tabularx}{0.48\textwidth}{c *{6}{Y}}
			\toprule
			& \textbf{4/10} & \textbf{5/10} & \textbf{6/10} & \textbf{7/10} & \textbf{8/10} & \textbf{9/10} \\
			\midrule
			
			\textbf{TORR}		& 0.093 & 0.109 & 0.088 & 0.086 & 0.079 & 0.083\\	
			\textbf{TORR25}		& 0.174 & 0.165 & 0.189 & 0.214 & 0.216 & 0.241\\		
			\textbf{RLHH}		& 0.635 & 0.619 & 0.670 & 0.907 & 0.851 & 0.932\\		
			\textbf{DRLR}		& \textbf{0.014} & 0.131 & 0.222 & 0.274 & 0.304 & 0.346\\			
			\textbf{SPL}		& 0.038 & 0.047 & 0.047 & 0.044 & 0.053 & 0.064\\
			\textbf{DSPL}		& 0.030 & \textbf{0.034} & \textbf{0.039} & \textbf{0.036} & \textbf{0.041} & \textbf{0.045}\\
			\bottomrule
		\end{tabularx}
	}
	
\end{table}
\subsubsection{Evaluation Metrics}
For the synthetic data, we measured the performance of the regression coefficient recovery using the averaged $L_2$ error $e = \norm{\hat{\bm w} - \bm w_*}_2$, 
where $\hat{\bm w}$ represents the recovered coefficients for each method and $\bm w_*$ represents the ground truth regression coefficients. 
For the real-world dataset, we used the mean absolute error (MAE) to evaluate the performance for rental price prediction. Defining $\hat{\bm y}$ and $\bm y$ as the predicted price and ground truth price, respectively, the mean absolute error between $\hat{\bm y}$ and $\bm y$ can be presented as $\mae (\hat{\bm y},\bm y)= \frac{1}{n} \sum_{i=1}^{n}\big|\hat{y_i} - y_i\big|$, where $y_i$ represents the label of the $i\nth$ data sample.

\begin{table*}[ht]
	\caption{Mean Absolute Error for Rental Price Prediction}
	\centering
	\small
	\label{table:rental_price}

	\scalebox{1.00}{
		\begin{tabularx}{0.85\textwidth}{c *{5}{Y}||c}
			\toprule
			& \multicolumn{6}{c}{\textbf{New York City} (\textbf{Corruption Ratio})} \\
			\cmidrule(lr){2-7} 
			& \textbf{10\%} & \textbf{30\%} & \textbf{50\%} & \textbf{70\%} & \textbf{90\%} & \textbf{Avg.} \\
			\midrule
			
			\textbf{TORR}		& 3.970$\pm$0.007	& 4.097$\pm$0.199	& 5.377$\pm$2.027 	& 7.025$\pm$3.379	& 7.839$\pm$3.435	& 5.662$\pm$1.809\\	
			\textbf{TORR25}		& 3.978$\pm$0.012	& 4.207$\pm$0.324	& 5.885$\pm$2.615 	& 7.462$\pm$3.569	& 8.369$\pm$3.675	& 5.980$\pm$2.039\\		
			\textbf{RLHH}		& 3.965$\pm$0.000	& 4.244$\pm$0.544	& 5.977$\pm$2.543 	& 7.525$\pm$3.491	& 8.463$\pm$3.646	& 6.034$\pm$2.045\\		
			\textbf{DRLR}		& \textbf{3.963$\pm$0.000}	& 4.026$\pm$0.089	& 5.884$\pm$2.692 	& 7.350$\pm$3.469	& 8.325$\pm$3.669	& 5.908$\pm$1.984\\
			\textbf{SPL}		& 3.979$\pm$0.006	& 4.141$\pm$0.199	& 5.185$\pm$1.578 	& 6.413$\pm$2.562	& 7.283$\pm$2.892	& 5.400$\pm$1.447\\
			\textbf{DSPL}		& 3.972$\pm$0.007	& \textbf{4.020$\pm$0.085}	& \textbf{4.123$\pm$0.198} 	& \textbf{5.291$\pm$2.086}	& \textbf{6.444$\pm$2.997}	& \textbf{4.770$\pm$1.075}\\	
		\end{tabularx}}
		
	\scalebox{1.00}{
		\begin{tabularx}{0.85\textwidth}{c *{5}{Y}||c}
			\toprule
			& \multicolumn{6}{c}{\textbf{Los Angeles (Corruption Ratio)}} \\
			\cmidrule(lr){2-7} 
			& \textbf{10\%} & \textbf{30\%} & \textbf{50\%} & \textbf{70\%} & \textbf{90\%} & \textbf{Avg.} \\
			\midrule
			\textbf{TORR}		& 3.991$\pm$0.001	& 4.035$\pm$0.067	& 5.666$\pm$2.754 	& 7.569$\pm$4.098	& 8.561$\pm$4.170	& 5.964$\pm$2.218\\	
			\textbf{TORR25}		& 3.993$\pm$0.003	& 4.103$\pm$0.147	& 5.986$\pm$3.062 	& 7.834$\pm$4.181	& 8.930$\pm$4.338	& 6.169$\pm$2.346\\		
			\textbf{RLHH}		& 3.992$\pm$0.000	& 4.023$\pm$0.064	& 6.224$\pm$3.198 	& 8.013$\pm$4.179	& 9.091$\pm$4.317	& 6.268$\pm$2.352\\		
			\textbf{DRLR}		& \textbf{3.990$\pm$0.001}	& \textbf{4.016$\pm$0.031}	& 6.471$\pm$3.552 	& 8.147$\pm$4.246	& 9.197$\pm$4.341	& 6.364$\pm$2.434\\
			\textbf{SPL}		& 3.994$\pm$0.004	& 4.135$\pm$0.159	& 5.432$\pm$2.097 	& 6.856$\pm$3.109	& 7.857$\pm$3.435	& 5.655$\pm$1.761\\
			\textbf{DSPL}		& 3.992$\pm$0.021	& 4.034$\pm$0.137	& \textbf{4.510$\pm$0.599} 	& \textbf{5.717$\pm$2.237}	& \textbf{6.943$\pm$3.194}	& \textbf{5.062$\pm$1.238}\\
			\bottomrule
	\end{tabularx}}
	
\end{table*}
\subsubsection{Comparison Methods}
We used the following methods to compare the performance of the robust regression task: 
\textit{Torrent} (\textit{Abbr. TORR}) \cite{bhatia2015robust}, which is a hard-thresholding based method that includes a parameter for the corruption ratio. As this parameter is hard to estimate in practice, we opted to use a variant, \textit{TORR25}, which represents a corruption parameter that is uniformly distributed across a range of $\pm 25\%$ off the true value.
We also used \textit{RLHH} \cite{rlhh17} for the comparison, which applies a recently proposed robust regression method based on heuristic hard thresholding with no additional parameters. This method computes the regression coefficients for each batch, and averages them all. 
The \textit{DRLR} \cite{Zhang2017OnlineAD} algorithm, which is a distributed robust learning method specifically designed to handle large scale data with a distributed robust consolidation.
The traditional self-paced learning algorithm (\textit{SPL}) \cite{kumar2010self} with the parameter $\lambda=1$ and the step size $\mu=1.1$ was also compared in our experiment.
For \textit{DSPL}, we used the same settings as for \textit{SPL} with the initial $\lambda_0 = 0.1$ and $\tau_\lambda=1$. All the results from each of these comparison methods were averaged over 10 runs. 


\subsection{Robust Regression in Synthetic Data} \label{section:regression_recovery}

\subsubsection{Recovery Coefficients Recovery}
Figure \ref{fig:beta} shows the coefficient recovery performance for different corruption ratios in uniform distribution. Specifically, Figures \ref{fig:beta_1} and \ref{fig:beta_2} show the results for a different number of features with a fixed data size. Looking at the results, we can conclude: 
1) Of the six methods tested, the \textit{DSPL} method outperformed all the competing methods, including \textit{TORR}, whose corruption ratio parameter uses the ground truth value. 
2) Although \textit{DRLR} turned in a competitive performance when the data corruption level was low. However, when the corruption ratio rose to over 40\%, the recovery error is increased dramatically. 
3) The \textit{TORR} method is highly dependent on the corruption ratio parameter. When the parameter is 25\% different from the ground truth, the error for \textit{TORR25} was over 50\% compared to \textit{TORR}, which uses the ground truth corruption ratio.
4) When the feature number is increased, the average error for the \textit{SPL} algorithm increased by a factor of four. However, the results obtained for the \textit{DSPL} algorithm varied consistently with the corruption ratio and feature number.
The results presented in Figures \ref{fig:beta_1} and \ref{fig:beta_3} conform that the \textit{DSPL} method consistently outperformed the other methods for larger datasets, while still achieving a close recovery of the ground truth coefficient.

\subsubsection{Performance on Different Corrupted Batches}
The regression coefficient recovery performance for different numbers of corrupted batches is shown in Table \ref{table:batchcorr}, ranging from four to nine corrupted batches out of the total of 10 batches. Each corrupted batch used in the experiment contains 90\% corrupted samples and each uncorrupted batch has 10\% corrupted samples. The results are shown for the averaged $L_2$ error in 10 different synthetic datasets with randomly ordered batches. Looking at the results shown in Table \ref{table:batchcorr}, we can conclude: 
1) When the ratio of corrupted batches exceeds 50\%, \textit{DSPL} outperforms all the competing methods with a consistent recovery error. 2) \textit{DRLR} performs the best when the mini-batch is 40\% corrupted, although its recovery error increases dramatically when the number of corrupted batch increases. 3) \textit{SPL} turns in a competitive performance for different levels of corrupted batches, but its error almost doubles when the number of corrupted batches increases from four to nine.

\subsubsection{Analysis of Parameter $\lambda$}
Figure \ref{fig:coeff_lag} show the relationship between the parameter $\lambda$ and the coefficient recovery error, along with the corresponding Lagrangian $L$. This result is based on the robust coefficient recovery task for a 90\% data corruption setting. 
Examining the blue line, as the parameter $\lambda$ increases, the recovery error continues to decrease until it reaches a critical point, after which it increases. These results indicate that the training process will keep improving the model until the parameter $\lambda$ becomes so large that some corrupted samples become incorporated into the training data. In the case shown here, the critical point is around $1.0$. 
The red line shows the value of the Lagrangian $L$ in terms of different values of the parameter $\lambda$, leading us to conclude that: 1) the Lagrangian monotonically decreases as $\lambda$ increases. 2) The Lagrangian decreases much faster once $\lambda$ reaches a critical point, following the same pattern as the recovery error shown in blue line.

\subsection{House Rental Price Prediction} \label{section:airbnb}
To evaluate the effectiveness of our proposed method in a real-world dataset, we compared its performance for rental price prediction for a number of different corruption settings, ranging from 10\% to 90\%. The additional data corruption was sampled from the uniform distribution [-0.5$y_i$, 0.5$y_i$], where $y_i$ denotes the price value of the $i\nth$ data point. Table \ref{table:rental_price} shows the mean absolute error for rental price prediction and its corresponding standard deviation over 10 runs for the New York City and Los Angeles datasets. The results indicate that: 1) The \textit{DSPL} method outperforms all the other methods for all the different corruption settings except when the corruption ratio is less than 30\%, and consistently produced with the most stable results (smallest standard deviation). 2) Although the \textit{DRLR} method performs the best when the corruption ratio is less than 30\%, the results of all the methods are very close. Moreover, as the corruption ratio rises, the error for \textit{DRLR} increases dramatically. 3) \textit{SPL} has a very competitive performance for all the corruption settings but is still around 12\% worse than the new \textit{DSPL} method proposed here, which indicates that considering the data integrally produces a better performance than can be achieved by breaking up the data into batches and treating them separately.

\section{Conclusion}\label{section:conclusion}
In this paper, a distributed self-paced learning algorithm (\textit{DSPL}) is proposed to extend the traditional \textit{SPL} algorithm to its distributed version for large scale datasets. To achieve this, we reformulated the original \textit{SPL} problem into a distributed setting and optimized the problem of treating different mini-batches in parallel based on consensus \textit{ADMM}. We also proved that our algorithm can be convergent under mild assumptions. Extensive experiments on both synthetic data and real-world rental price data demonstrated that the proposed algorithms are very effective, outperforming the other comparable methods over a range of different data settings. 

\renewcommand{\baselinestretch}{1}
\bibliographystyle{named}
\bibliography{ijcai18_dspl}

\end{document}